\documentclass{IEEEtran}

\usepackage[numbers,sort&compress]{natbib}
\usepackage{amsmath,amssymb,amsfonts,amsthm,mathtools}
\usepackage{bm}
\usepackage{xcolor}
\usepackage{hyperref}

\newtheorem{proposition}{Proposition}
\newtheorem{lemma}{Lemma}
\newtheorem{corollary}{Corollary}
\newtheorem{remark}{Remark}
\newtheorem{definition}{Definition}

\DeclareMathOperator{\tril}{tril}

\DeclareMathOperator{\diag}{diag}

\begin{document}

\title{Robust Automatic Differentiation of Square-Root Kalman Filters via Gramian Differentials}

\author{Adrien~Corenflos%
\thanks{The author is with the Department of Statistics, University of Warwick, Coventry, 
UK (e-mail: adrien.corenflos@warwick.ac.uk).}}

\maketitle

\begin{abstract}
Square-root Kalman filters propagate state covariances in Cholesky-factor form for numerical stability, and are a natural target for gradient-based parameter learning in state-space models.
Their core operation, triangularization of a matrix $M \in \mathbb{R}^{n \times m}$, is computed via a QR decomposition in practice, but naively differentiating through it causes two problems: the semi-orthogonal factor is non-unique when $m > n$, yielding undefined gradients; and the standard Jacobian formula involves inverses, which diverges when $M$ is rank-deficient.
Both are resolved by the observation that all filter outputs relevant to learning depend on the input matrix only through the Gramian $MM^\top$, so the composite loss is smooth in $M$ even where the triangularization is not.
We derive a closed-form chain-rule directly from the differential of this Gramian identity, prove it exact for the Kalman log-marginal likelihood and filtered moments, and extend it to rank-deficient inputs via a two-component decomposition: a column-space term based on the Moore--Penrose pseudoinverse, and a null-space correction for perturbations outside the column space of $M$.
\end{abstract}

\begin{IEEEkeywords}
automatic differentiation, Gramian differentials, Kalman filtering, rank deficiency, square-root filters, state-space models.
\end{IEEEkeywords}

\section{Introduction}
\label{sec:introduction}

We consider the linear Gaussian state-space model (LGSSM)
\begin{subequations}
\label{eq:ssm}
\begin{align}
  \mathbf{x}_{t+1} &= A\,\mathbf{x}_t + \mathbf{u}_t, \quad \mathbf{u}_t \sim \mathcal{N}(0, U), \\
  \mathbf{y}_t     &= B\,\mathbf{x}_t + \mathbf{v}_t, \quad \mathbf{v}_t \sim \mathcal{N}(0, V),
\end{align}
\end{subequations}
with state $\mathbf{x}_t \in \mathbb{R}^{d_x}$, and observation $\mathbf{y}_t \in \mathbb{R}^{d_y}$, for some transition and observation matrices $A$, $B$, respectively, and positive semi-definite covariance matrices $U$, and $V$.
The Kalman filter~\cite{kalman_new_1960} computes the posterior mean $\hat{\mathbf{x}}_{t|t}$ and covariance $P_{t|t}$ recursively, and evaluates the log-marginal likelihood $\mathcal{L}(\theta) =\log p(\mathbf{y}_{1:t} \mid \theta)$ for the parameter set $\theta = \{A, B, U, V, \hat{\mathbf{x}}_{0|0}, P_{0|0}\}$~\cite[see, e.g.,][Chapter 6]{sarkka_bayesian_2023}.
Gradient-based estimation of $\theta$ requires $\nabla_\theta\mathcal{L}$, which modern automatic differentiation~\cite[see, e.g.,][for a review]{baydin_automatic_2018} frameworks~\cite[e.g.,][]{bradbury_jax_2018} obtain by applying the chain rule through every primitive of the filter, in forward mode as a Jacobian-vector product~\citep[JVP,][Chapter 3]{griewank_evaluating_2008} or in reverse mode as a vector-Jacobian product~\citep[VJP,][Chapter 3]{griewank_evaluating_2008}.

Standard covariance updates are however numerically unstable over long horizons: rounding errors can cause $P_{t|t}$ to lose positive semi-definiteness.
Square-root filters~\cite{kaminski_discrete_1971,bierman_factorization_1977} avoid this by propagating a lower-triangular factor $S_{t|t}$ with $P_{t|t} = S_{t|t}S_{t|t}^\top$, and are increasingly used in parallel-in-time implementations~\cite{yaghoobi_parallel_2025}.
Every step of a square-root filter reduces to one call to the \emph{triangularization operator} $\mathcal{T}(M) = L$, where $M \in \mathbb{R}^{n \times m}$ and $L \in \mathbb{R}^{n \times n}$ satisfies $LL^\top = MM^\top$.
In practice $\mathcal{T}$ is computed via the thin QR decomposition of $M^\top$,
\begin{equation}
  M^\top = Q\,R, \quad Q \in \mathbb{R}^{m \times n},\; Q^\top Q = I_n,\; R \in \mathbb{R}^{n \times n},
  \label{eq:intro_qr}
\end{equation}
by setting $L = R^\top$.
When $M$ is square and full-rank, differentiating through the QR decomposition is well-defined~\cite[Section~3.1]{roberts_qr_2026} and has been used in this context, e.g., in~\cite{yaghoobi_parallel_2025}.
In general, however, two failure modes arise.
First, when $m > n$, the map $\mathcal{T}$ has no unique differential: $Q$ admits a continuous $(m-n)$-dimensional family of valid choices~\cite[Chapter 5]{golub_matrix_2013}, so there is no canonical JVP for $L$ or $M$; this occurs in the predict step whenever $\operatorname{rank}(U) > 0$ and in the update step whenever $d_y > 0$.
Second, the standard JVP formula for $R$ involves $R^{-1}$~\cite{seeger_auto-differentiating_2017}, which is singular whenever $M$ is rank-deficient; this arises when $U$ or $V$ is singular, or when $P_{t|t}$ degenerates during filtering.

Both failures are resolved by a single observation: the filter's marginal likelihood, filtered mean, and filtered covariance all depend on $L = \mathcal{T}(M)$ only through the Gramian $LL^\top = MM^\top$, so $\ell \circ \mathcal{T}$ factors through the smooth map $M \mapsto MM^\top$ even though $\mathcal{T}$ is not everywhere differentiable.
In view of this, we propose to differentiate $\mathcal{T}$ directly from the differential of the Gramian identity $LL^\top = MM^\top$, bypassing the QR decomposition entirely.
Our contributions are: (i) a closed-form JVP for $\mathcal{T}$ that is exact for any Gramian-dependent loss, recovers the classical QR-based formula wherever it exists, and extends to rank-deficient inputs via a column-space component based on the Moore--Penrose pseudoinverse together with a null-space correction term that accounts for perturbations outside $\operatorname{col}(L)$ (Section~\ref{sec:method}); (ii) a proof that all relevant Kalman filter outputs are Gramian-dependent, illustrated on a single predict-update step (Section~\ref{sec:single_step}); (iii) a VJP obtained for free by automatic transposition~\cite{radul_you_2023}, by virtue of the JVP being linear in the input tangent (Section~\ref{sec:vjp}); and (iv) an open-source implementation.

\section{Square-Root Kalman Filtering}
\label{sec:sqrtkf}

The square-root filter maintains the lower-triangular Cholesky factor $S_{t|t}$ satisfying $P_{t|t} = S_{t|t}S_{t|t}^\top$.
Each predict-update cycle is built from the following primitive.

\begin{definition}[Triangularization Operator]
For $M \in \mathbb{R}^{n \times m}$ with $r = \operatorname{rank}(M)$, the triangularization
$\mathcal{T}(M)$ is a matrix $L \in \mathbb{R}^{n \times n}$ satisfying $LL^\top = MM^\top$, of
the form
\begin{equation*}
  L = \begin{pmatrix} L_1 & 0 \\ L_2 & 0 \end{pmatrix},
\end{equation*}
where $L_1 \in \mathbb{R}^{r \times r}$ is lower-triangular with non-negative diagonal and
$L_2 \in \mathbb{R}^{(n-r) \times r}$ is arbitrary.
It is computed via the column-pivoted thin QR decomposition \eqref{eq:intro_qr} by setting
$L = R^\top$.
When $r = n$, $L_2$ is empty and $L = L_1$ is the usual lower-triangular Cholesky factor.
In all filter applications below, $M$ is a \emph{wide} matrix ($m \geq n$), consistently with
the block-augmented constructions in \eqref{eq:Mpred}--\eqref{eq:Mupd}.
\end{definition}

\noindent\textbf{Predict step.}
Form the augmented matrix
\begin{equation}
  M_{\mathrm{pred}}
    = \begin{bmatrix} A\,S_{t|t}^\top & (U^{1/2})^\top \end{bmatrix}
    \in \mathbb{R}^{d_x \times 2d_x},
  \label{eq:Mpred}
\end{equation}
where $U^{1/2} \in \mathbb{R}^{d_x \times d_x}$ satisfies $U = U^{1/2}(U^{1/2})^\top$.
Then $S_{t+1|t} = \mathcal{T}(M_{\mathrm{pred}})$, and one verifies $S_{t+1|t}S_{t+1|t}^\top = AP_{t|t}A^\top + U$.
When $U$ is rank-deficient, so is $M_{\mathrm{pred}}$, and computing gradients will require Definition~\ref{def:jvp} in Section~\ref{sec:method}.

\noindent\textbf{Update step.}
Write $V = V^{1/2}(V^{1/2})^\top$ with $V^{1/2} \in \mathbb{R}^{d_y \times d_y}$.
Form
\begin{equation}
  M_{\mathrm{upd}}
    = \begin{bmatrix} B\,S_{t+1|t} & V^{1/2} \\ S_{t+1|t} & 0 \end{bmatrix}
    \in \mathbb{R}^{(d_y+d_x) \times (d_y+d_x)}.
  \label{eq:Mupd}
\end{equation}
Partitioning the result $\mathcal{T}(M_{\mathrm{upd}})$ conformably as
\begin{equation}
  \mathcal{T}(M_{\mathrm{upd}}) =
    \begin{bmatrix} L_{11} & 0 \\ L_{21} & L_{22} \end{bmatrix},
  \quad L_{11} \in \mathbb{R}^{d_y \times d_y},\; L_{22} \in \mathbb{R}^{d_x \times d_x},
\end{equation}
one reads off the Cholesky factor of the predicted observation covariance $L_{11}$
(with $L_{11}L_{11}^\top = V + BP_{t+1|t}B^\top$) used to compute
$p(\mathbf{y}_t \mid \mathbf{y}_{1:t-1}; \theta)$, the Kalman gain
$K_t = L_{21}L_{11}^{-1}$,
and the posterior Cholesky factor $S_{t+1|t+1} = L_{22}$.
When $V$ is rank-deficient, so is $M_{\mathrm{upd}}$, and we will again need Definition~\ref{def:jvp} in Section~\ref{sec:method}.

\section{Differentiation via Gramian Differentials}
\label{sec:method}

\subsection{The Smooth-Factorization Argument}

\begin{lemma}[Smooth factorization]
\label{lem:smooth}
Let $\ell: \mathbb{R}^{n \times n}_{\mathrm{sym}} \to \mathbb{R}$ be smooth and suppose $\ell$ depends on $L = \mathcal{T}(M)$ only through the Gramian $\Sigma = LL^\top$.
Then $M \mapsto \ell(\mathcal{T}(M))$ factors as the composition of the polynomial map $M \mapsto MM^\top$ with a smooth function of $\Sigma$, and is therefore smooth in $M$ everywhere.
\end{lemma}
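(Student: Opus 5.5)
The proof is essentially a matter of making the hypothesis precise and then composing smooth maps. I will read ``$\ell$ depends on $L$ only through $\Sigma = LL^\top$'' as the existence of a smooth function $g : \mathbb{R}^{n\times n}_{\mathrm{sym}} \to \mathbb{R}$ with
\[
\ell(L) = g(LL^\top)
\]
for every admissible $L$. That is, the loss is really a function of $\Sigma$, evaluated after forming the Gramian of $L$. This is how the statement's typing $\ell:\mathbb{R}^{n\times n}_{\mathrm{sym}}\to\mathbb{R}$ should be interpreted.

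The first step is to use the defining property of the triangularization operator: any $L=\mathcal{T}(M)$ satisfies $LL^\top = MM^\top$. This holds whichever representative is returned, so it is unaffected by the non-uniqueness of $Q$ or by pivoting. Hence
\[
\ell(\mathcal{T}(M)) = g(\mathcal{T}(M)\mathcal{T}(M)^\top) = g(MM^\top).
\]
A useful by-product is that the composite is well defined as a function of $M$ even though $\mathcal{T}$ itself is not single-valued.

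The second step is to note that $\phi: M\mapsto MM^\top$, from $\mathbb{R}^{n\times m}$ to $\mathbb{R}^{n\times n}_{\mathrm{sym}}$, is polynomial (each entry is $\sum_k M_{ik}M_{jk}$). It is therefore $C^\infty$, with differential
\[
d\phi_M[\dot M] = \dot M M^\top + M\dot M^\top .
\]
This holds at every $M$, with no rank condition. The chain rule then gives that $g\circ\phi$ is smooth everywhere, and its differential is $dg_{MM^\top}[\dot M M^\top + M\dot M^\top]$. This is the formula the paper will later build on.

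The only step needing care is the first: ensuring that the hypothesis gives a \emph{smooth} $g$ on an open set containing every Gramian $MM^\top$ that arises. A priori $g$ might only be defined on the image of $L\mapsto LL^\top$, which is the closed PSD cone and has a boundary exactly at rank-deficient matrices. My plan is to take the statement's hypothesis literally, as a smooth $\ell$ on all of $\mathbb{R}^{n\times n}_{\mathrm{sym}}$ and hence on a neighbourhood of the cone. If $g$ is given only on the cone, I would add a remark that smoothness is meant in the sense of a smooth extension to a neighbourhood, which is what the Kalman losses in Section~\ref{sec:single_step} provide when $V \succ 0$ or a comparable condition holds.
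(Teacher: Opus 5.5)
Your proposal is correct and follows essentially the same route as the paper: every branch of $\mathcal{T}$ satisfies $LL^\top = MM^\top$, so the composite equals a function of the Gramian composed with the polynomial map $M \mapsto MM^\top$. Your explicit requirement that the outer function $g$ be smooth on an open set containing the PSD cone is a welcome addition. The paper leaves this point implicit, since its $\tilde\ell$ is a priori defined only on matrices of the form $LL^\top$, and it simply takes the smoothness of $\tilde\ell$ from the hypothesis.
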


\begin{proof}
For any version of $\mathcal{T}$, $LL^\top = MM^\top$, so every branch yields the same value $\ell(\mathcal{T}(M)) = \tilde\ell(MM^\top)$ where $\tilde\ell(\Sigma) \coloneqq \ell(L)$ for any $L$ with $LL^\top = \Sigma$.
The map $M \mapsto MM^\top$ is polynomial, hence smooth.
\end{proof}

By Lemma~\ref{lem:smooth}, the JVP of $\ell \circ \mathcal{T}$ at $M$ in direction $dM$ equals the JVP of $M \mapsto \tilde\ell(MM^\top)$, which is well-defined everywhere.
Rather than differentiating through $MM^\top$ directly, we derive a surrogate tangent $dL$ satisfying the differential of $LL^\top = MM^\top$; by the chain rule through $\ell(L)$, any such $dL$ yields the correct JVP for $\ell$.

\subsection{Gramian Sufficiency}

\begin{proposition}[Gramian sufficiency]
\label{prop:gramian}
Let $\ell: \mathbb{R}^{n \times n}_{\mathrm{sym}} \to \mathbb{R}$ be a smooth function of the Gramian $\Sigma = LL^\top = MM^\top$, and let $dM$ be a tangent at $M$.
If $dL \in \mathbb{R}^{n \times n}$ satisfies the Gramian differential identity
\begin{equation}
  dL\,L^\top + L\,dL^\top = dM\,M^\top + M\,dM^\top,
  \label{eq:gramian_id}
\end{equation}
then the JVP of $\ell \circ \mathcal{T}$ at $M$ in direction $dM$ computed through $dL$ is exact.
\end{proposition}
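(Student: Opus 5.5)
The plan is to make precise what ``computed through $dL$'' means and then reduce everything to the chain rule for the smooth map $\Sigma \mapsto \tilde\ell(\Sigma)$ supplied by Lemma~\ref{lem:smooth}. Write $\ell(L) = \tilde\ell(LL^\top)$, where $\tilde\ell$ is smooth on symmetric matrices. I would make this the defining relation: whenever a loss is evaluated on a factor $L$, it is evaluated through the composition $L \mapsto LL^\top \mapsto \tilde\ell$, so that $\ell$ is defined and smooth on all of $\mathbb{R}^{n\times n}$ and not only on the image of $\mathcal{T}$. The JVP ``computed through $dL$'' is then $D\ell(L)[dL]$. The true JVP of $\ell\circ\mathcal{T}$ at $M$ is, by Lemma~\ref{lem:smooth}, $D\tilde\ell(MM^\top)[\,D(MM^\top)[dM]\,]$.

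The key steps, in order, are as follows.

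\textbf{Step 1.} Apply the chain rule to $L \mapsto LL^\top \mapsto \tilde\ell$. Since the differential of $L\mapsto LL^\top$ at $L$ in direction $dL$ is $dL\,L^\top + L\,dL^\top$, this gives
\begin{equation*}
  D\ell(L)[dL] = D\tilde\ell(LL^\top)\bigl[dL\,L^\top + L\,dL^\top\bigr].
\end{equation*}

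\textbf{Step 2.} Apply the chain rule to $M\mapsto MM^\top\mapsto\tilde\ell$. The differential of $M \mapsto MM^\top$ is $dM\,M^\top + M\,dM^\top$, so the exact JVP equals
\begin{equation*}
  D\tilde\ell(MM^\top)\bigl[dM\,M^\top + M\,dM^\top\bigr].
\end{equation*}

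\textbf{Step 3.} Compare the two expressions. The base points agree because $LL^\top = MM^\top$ by definition of $\mathcal{T}$. The arguments agree by the hypothesis \eqref{eq:gramian_id}. Hence the two JVPs coincide.

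The proof should also note that the argument is independent of which branch of $\mathcal{T}$ produced $L$ and of which solution $dL$ of \eqref{eq:gramian_id} is chosen; that freedom is exactly the non-uniqueness discussed in the introduction. It should also note that no invertibility of $L$ or $M$ is used, so the result covers the rank-deficient case.

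The only real obstacle is interpretive rather than computational: the proposition must be read with $\ell$ as a function of the Gramian, extended to a smooth function of arbitrary $L$ via $\tilde\ell(LL^\top)$. Without that extension, $D\ell(L)[dL]$ is not meaningful, because $dL$ need not be tangent to any set of triangular factors. The fix is to state this reading explicitly at the start of the proof. Once it is fixed, Steps 1--3 are one-line chain-rule applications.
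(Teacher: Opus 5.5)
Your proposal is correct and follows the paper's proof: both compute $d\Sigma$ through the two parametrizations $\Sigma = LL^\top$ and $\Sigma = MM^\top$, and then use \eqref{eq:gramian_id} to conclude that the chain rule through the smooth function of $\Sigma$ gives the same $d\ell$. Reading $\ell(L)$ explicitly as $\tilde\ell(LL^\top)$, and noting that the base points $LL^\top = MM^\top$ coincide, only makes precise what the paper's one-line argument leaves implicit.
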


\begin{proof}
We have $d\ell = \langle \nabla_\Sigma \ell,\, d\Sigma \rangle_F$.
Differentiating $\Sigma = LL^\top$ gives $d\Sigma = dL\,L^\top + L\,dL^\top$, and differentiating $\Sigma = MM^\top$ gives $d\Sigma = dM\,M^\top + M\,dM^\top$.
Equation~\eqref{eq:gramian_id} states these are equal, so both paths yield the same $d\ell$.
\end{proof}

The identity~\eqref{eq:gramian_id} is satisfied by any exact tangent to $\mathcal{T}$ at a full-rank point: it is simply the differential of the relation $LL^\top = MM^\top$.
Proposition~\ref{prop:gramian} shows that the entire class of tangents satisfying it yields exact JVPs, which allows us to construct a well-defined surrogate even where $\mathcal{T}$ is not differentiable.

\subsection{Deriving the Surrogate Tangent}

Let $P = LL^\dagger$ be the orthogonal projector onto $\operatorname{col}(L)$ and $P^\perp = I - P$ its complement.
Since $LL^\top = MM^\top$ the column spaces of $L$ and $M$ coincide, giving $PM = M$.
We seek $dL$ satisfying~\eqref{eq:gramian_id} by splitting the right-hand side as
\begin{equation}
\begin{split}
  dM\,M^\top + M\,dM^\top
    &= \underbrace{\bigl(P\,dM\,M^\top + M\,dM^\top P\bigr)}_{\text{column-space part}}\\
    &+ \underbrace{\bigl(P^\perp dM\,M^\top + M\,dM^\top P^\perp\bigr)}_{\text{null-space part}},
  \label{eq:rhs_split}
\end{split}
\end{equation}
and constructing a contribution for each part.

\paragraph{Column-space component}
From the QR decomposition $M = LQ^\top$, we have $M^\top = QL^\top$.
Substituting this into the first term of the column-space part yields $P\,dM\,M^\top = (LL^\dagger) dM (QL^\top) = L (L^\dagger dM Q) L^\top = L K L^\top$, where
\begin{equation}
  K = L^\dagger\,dM\,Q \in \mathbb{R}^{n \times n}.
  \label{eq:K}
\end{equation}
For the second term, we explicitly use the symmetry of the orthogonal projector $P = LL^\dagger$, such that $P = P^\top$. This yields $M\,dM^\top P = M\,dM^\top P^\top = (P\,dM\,M^\top)^\top = (L K L^\top)^\top = L K^\top L^\top$.
Consequently, the column-space part exactly equals $L(K + K^\top)L^\top$.
The equation $dN + dN^\top = K + K^\top$ for lower-triangular $dN$ has the unique solution
$dN = \tril(K + K^\top) - \diag(K)$,
since $\diag(K + K^\top) = 2\diag(K)$.
Setting
\begin{equation}
  dL_{\mathrm{col}} = L\,\bigl[\tril(K + K^\top) - \diag(K)\bigr],
  \label{eq:dLcol}
\end{equation}
one verifies
\begin{align}
    dL_{\mathrm{col}} L^\top + L\,dL_{\mathrm{col}}^\top
    &= L(K+K^\top)L^\top \nonumber\\
    &= P\,dM\,M^\top + M\,dM^\top P.
\end{align}

\paragraph{Null-space component}
Set $dL_{\mathrm{null}} = P^\perp dM\,Q$.
Using $QL^\top = M^\top$ and $LQ^\top = M$,
\begin{equation*}
  dL_{\mathrm{null}} L^\top + L\,dL_{\mathrm{null}}^\top
    = P^\perp dM\,M^\top + M\,dM^\top P^\perp,
\end{equation*}
which matches the null-space part of~\eqref{eq:rhs_split}.

\begin{definition}[Surrogate JVP]
\label{def:jvp}
The surrogate tangent for $\mathcal{T}$ at $M$ in direction $dM$ is
\begin{equation}
  dL = dL_{\mathrm{col}} + dL_{\mathrm{null}},
  \label{eq:dL}
\end{equation}
with $dL_{\mathrm{col}}$ as in \eqref{eq:dLcol}, $K$ as in \eqref{eq:K}, and $dL_{\mathrm{null}} = (I - LL^\dagger)\,dM\,Q$.
When $L$ is invertible, $P = I$ and $dL_{\mathrm{null}} = 0$, recovering the classical QR-based formula~\cite{seeger_auto-differentiating_2017}.
\end{definition}

\begin{proposition}[Verification]
\label{prop:verify}
The tangent $dL$ of Definition~\ref{def:jvp} satisfies the Gramian differential identity \eqref{eq:gramian_id}.
\end{proposition}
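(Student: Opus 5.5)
Because $dL \mapsto dL\,L^\top + L\,dL^\top$ is linear, the plan is to verify the identity separately for $dL_{\mathrm{col}}$ and $dL_{\mathrm{null}}$. The two contributions are then added and compared with the split \eqref{eq:rhs_split} of the right-hand side of \eqref{eq:gramian_id}. That split is a genuine identity, since $P + P^\perp = I$ gives $P\,dM\,M^\top + P^\perp dM\,M^\top = dM\,M^\top$, and similarly for the transposed term. So it suffices to show two things: $dL_{\mathrm{col}}$ reproduces the column-space part, and $dL_{\mathrm{null}}$ reproduces the null-space part.

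We first collect the structural facts used throughout. Namely, $M = LQ^\top$ with $Q^\top Q = I_n$ (from the thin QR factorization with $L = R^\top$), and $P = LL^\dagger$ is symmetric and idempotent with $PL = L$. Moreover $PM = M$, because $P M = LL^\dagger L Q^\top = LQ^\top$; this avoids relying on the informal column-space argument.

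For the column-space term, set $N = \tril(K+K^\top) - \diag(K)$. We check entrywise that $N + N^\top = K + K^\top$. Strictly below the diagonal, $N$ equals $(K+K^\top)_{ij}$, and $N^\top$ vanishes there. Above the diagonal the situation is symmetric. On the diagonal we get $2\bigl((K+K^\top)_{ii} - K_{ii}\bigr) = 2K_{ii} = (K+K^\top)_{ii}$. This gives
\[
dL_{\mathrm{col}}L^\top + L\,dL_{\mathrm{col}}^\top = L(N+N^\top)L^\top = L(K+K^\top)L^\top .
\]
Next, $LKL^\top = LL^\dagger\,dM\,QL^\top = P\,dM\,M^\top$, using $QL^\top = M^\top$. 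Transposing and using $P^\top = P$ gives $LK^\top L^\top = M\,dM^\top P$, which matches the column-space part.

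For the null-space term, we compute directly that $dL_{\mathrm{null}}L^\top = P^\perp dM\,QL^\top = P^\perp dM\,M^\top$. Since $P^\perp$ is symmetric, its transpose is $L\,dL_{\mathrm{null}}^\top = M\,dM^\top P^\perp$, which is exactly the null-space part. Summing the two pieces then yields \eqref{eq:gramian_id}.

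The main point requiring care is the rank-deficient case, where $L$ has zero trailing columns. There one must make sure the identities $M = LQ^\top$ and $M^\top = QL^\top$ still hold with the particular $Q$ produced by the column-pivoted QR. Concretely, the pivoting permutation must be absorbed into $Q$, so that $M^\top = QR$ holds with $R$ un-permuted. The columns of $Q$ paired with zero columns of $L$ may be arbitrary, but they drop out of $LQ^\top$ and so are harmless. If the pivoting instead yields $M^\top\Pi = QR$, we will redefine $Q \leftarrow \Pi Q$ before running the argument. Apart from this bookkeeping, the verification is purely algebraic and uses no invertibility of $L$.
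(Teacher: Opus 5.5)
Your proof is correct and follows the paper's argument: you check separately that $dL_{\mathrm{col}}$ reproduces the column-space part of \eqref{eq:rhs_split} (using $QL^\top = M^\top$, the symmetry of $P$, and the $\tril$ identity, which you verify entrywise where the paper only asserts it) and that $dL_{\mathrm{null}}$ reproduces the null-space part, then sum using $P + P^\perp = I$. One small slip is in your pivoting remark: if $M^\top\Pi = QR$ then $M^\top = QR\Pi^\top$, so the permutation sits on the triangular side and cannot be absorbed as $Q \leftarrow \Pi Q$ (which does not even typecheck when $m \neq n$; moreover $L = R^\top$ would then fail $LL^\top = MM^\top$). Since the verification only uses $M = LQ^\top$, which the paper also presupposes, this does not affect the argument.
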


\begin{proof}
By construction (Section~\ref{sec:method}), $dL_{\mathrm{col}}$ satisfies the column-space part of~\eqref{eq:rhs_split} and $dL_{\mathrm{null}}$ satisfies the null-space part.
Summing:
\begin{align*}
  dL\,L^\top + L\,dL^\top
    &= \bigl(P\,dM\,M^\top + M\,dM^\top P\bigr)\\
    &+ \bigl(P^\perp dM\,M^\top + M\,dM^\top P^\perp\bigr) \\
    &= (P + P^\perp)\,dM\,M^\top + M\,dM^\top(P + P^\perp) \\
    &= dM\,M^\top + M\,dM^\top. \qedhere
\end{align*}
\end{proof}

\begin{remark}
When $\operatorname{rank}(M) = r < n$, the factor $L$ has $n-r$ zero columns and $L^\dagger$ is its Moore--Penrose pseudoinverse.
The column-space component $dL_{\mathrm{col}}$ handles perturbations $dM$ whose image under $Q$ lies within $\operatorname{col}(L)$, while the null-space component $dL_{\mathrm{null}}$ supplies the contribution from perturbations whose image lies outside $\operatorname{col}(L)$.
Together they constitute the unique solution to~\eqref{eq:gramian_id} of the form $dL_{\mathrm{col}} + dL_{\mathrm{null}}$, and $dL_{\mathrm{null}}$ vanishes when $L$ is full rank.
\end{remark}

\subsection{Application to a Single Predict-Update Step}
\label{sec:single_step}

We verify that Proposition~\ref{prop:gramian} applies to all relevant Kalman filter outputs.
The single-step log-likelihood contribution is
\begin{equation}
  \ell_t = -\tfrac{1}{2}\bigl[ d_y\ln(2\pi) + \ln\det(P_\nu) + \bm\nu_t^\top P_\nu^{-1}\bm\nu_t \bigr],
\end{equation}
where $\bm\nu_t = \mathbf{y}_t - B\hat{\mathbf{x}}_{t|t-1}$ is the innovation and $P_\nu = B P_{t|t-1} B^\top + V$ the innovation covariance.
The predicted mean $\hat{\mathbf{x}}_{t|t-1} = A\hat{\mathbf{x}}_{t-1|t-1}$ does not depend on $S_{t|t-1}$.
The innovation covariance $P_\nu$ is a linear function of $P_{t|t-1} = S_{t|t-1}S_{t|t-1}^\top$, the Kalman gain $K_t = P_{t|t-1}B^\top P_\nu^{-1}$ is a rational function of $P_{t|t-1}$, and the posterior covariance $P_{t|t} = P_{t|t-1} - K_t P_\nu K_t^\top$ depends on $P_{t|t-1}$ and $P_\nu$.
In every case the dependence on $S_{t|t-1}$ is solely through the Gramian $\Sigma_{t|t-1} = S_{t|t-1}S_{t|t-1}^\top = P_{t|t-1}$, never through the square-root factor $L$ directly.
By Proposition~\ref{prop:gramian}, any $dL$ satisfying \eqref{eq:gramian_id} therefore produces exact gradients of $\ell_t$, the filtered mean, and the filtered covariance with respect to $\theta$.

\section{Linearity and Reverse-Mode Differentiation}
\label{sec:vjp}

\begin{proposition}[Linearity]
\label{prop:linear}
The map $dM \mapsto dL$ of Definition~\ref{def:jvp} is linear in $dM$, with the primal quantities $L^\dagger$, $L$, and $Q$ held fixed.
\end{proposition}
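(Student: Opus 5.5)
The plan is to write the map $dM \mapsto dL$ as a composition of maps that are each linear once the primal quantities $L$, $L^\dagger$, $Q$ (and hence $P = LL^\dagger$ and $P^\perp = I - P$) are fixed. Linearity then follows because linear maps are closed under composition and addition. Fixing the primal point is what makes this well posed: all of $L$, $L^\dagger$, $Q$ depend only on $M$, not on the tangent, so they act as constant coefficients.

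First I will check linearity step by step. The map $dM \mapsto K = L^\dagger\,dM\,Q$ from \eqref{eq:K} is linear, since it is left and right multiplication by fixed matrices. Transposition $K \mapsto K^\top$ is linear, so $K \mapsto K + K^\top$ is linear. Next, $\tril$ and $\diag$ are coordinate projections: they zero out a fixed set of entries, or keep only the diagonal. Each is therefore linear, and $K \mapsto \tril(K+K^\top) - \diag(K)$ is linear as a difference of linear maps. Left multiplication by $L$ preserves linearity, which gives $dL_{\mathrm{col}}$ in \eqref{eq:dLcol}. Finally, $dL_{\mathrm{null}} = (I - LL^\dagger)\,dM\,Q$ is again a two-sided multiplication by fixed matrices. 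The sum \eqref{eq:dL} is then linear.

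To make the conclusion concrete, I will verify $dL(\alpha\,dM_1 + \beta\,dM_2) = \alpha\,dL(dM_1) + \beta\,dL(dM_2)$ by propagating the combination through each stage. Equivalently, I can write $dL$ as a single expression $\mathcal{J}[dM]$ with $\mathcal{J}$ built from the operators above.

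No real obstacle is expected. The one point to state carefully is that the pseudoinverse $L^\dagger$ is a nonsmooth and even discontinuous function of $M$ at rank changes. This does not matter for linearity in the tangent, because $L^\dagger$ is evaluated at the primal point and held fixed, exactly as the proposition stipulates. The same remark covers the non-uniqueness of $Q$: whichever $Q$ the primal computation returns is held fixed, and linearity holds for that choice.

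I will also note the consequence for Section~\ref{sec:vjp}. Because the map is an explicit linear operator, its adjoint exists and can be obtained by transposing each stage: $X \mapsto L^{\dagger\top} X Q^\top$ for the two-sided products, and the adjoints of $\tril$ and $\diag$ for the projections. This is what justifies automatic transposition.
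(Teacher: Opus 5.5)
Your proof is correct and follows the same route as the paper: $K = L^\dagger\,dM\,Q$ and $dL_{\mathrm{null}} = (I - LL^\dagger)\,dM\,Q$ are two-sided multiplications by fixed matrices, and $\tril$, $\diag$, transposition and left multiplication by $L$ are all linear, so their sum $dL$ is linear in $dM$. Your remark that the possibly discontinuous $L^\dagger$ and the non-unique $Q$ are simply held fixed at the primal point is a sensible clarification that the paper leaves implicit.
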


\begin{proof}
Both $K = L^\dagger dMQ$ and $dL_{\mathrm{null}} = (I - LL^\dagger)dMQ$ are linear in $dM$.
The operations $\tril(\cdot)$, $\diag(\cdot)$, and left multiplication by $L$ are all linear.
Hence $dL = dL_{\mathrm{col}} + dL_{\mathrm{null}}$ is linear in $dM$.
\end{proof}

\begin{corollary}[VJP by transposition]
\label{cor:vjp}
Write $dL = \Phi(dM)$ for the linear map of Proposition~\ref{prop:linear}.
For an output cotangent $G_L = \partial\mathcal{L}/\partial L$, the VJP is $G_M = \Phi^\star(G_L)$, where $\Phi^\star$ is the adjoint defined by $\langle G_L, \Phi(dM)\rangle_F = \langle G_M, dM\rangle_F$.
Since $\Phi$ is a composition of primitive linear operations (matrix multiplications, $\tril$, $\diag$), AD frameworks derive $\Phi^\star$ mechanically by transposing each primitive~\cite{radul_you_2023}, without any additional derivation.
\end{corollary}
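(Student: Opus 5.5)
The plan is to make the adjoint $\Phi^\star$ explicit by transposing the primitive linear maps that make up $\Phi$ one at a time, and then to confirm the defining identity $\langle G_L, \Phi(dM)\rangle_F = \langle \Phi^\star(G_L), dM\rangle_F$. The existence and uniqueness of $\Phi^\star$ on the finite-dimensional space $\mathbb{R}^{n\times m}$ with the Frobenius inner product follows from Proposition~\ref{prop:linear} and the Riesz representation theorem. What remains is to show that the primitive-wise transposition actually yields this adjoint. That is what justifies the statement that AD frameworks obtain it ``mechanically''.

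First I would write $\Phi$ as a composition of elementary linear maps, with $L$, $L^\dagger$ and $Q$ frozen:
\begin{equation*}
\begin{aligned}
K &= L^\dagger\,dM\,Q, \qquad dN = \tril(K+K^\top)-\diag(K), \\
dL &= L\,dN + (I-LL^\dagger)\,dM\,Q .
\end{aligned}
\end{equation*}
For each primitive I would record its Frobenius adjoint. Left and right multiplication $X\mapsto AXB$ has adjoint $Y\mapsto A^\top Y B^\top$. Transposition is self-adjoint. $\tril$ and $\diag$ are orthogonal projections onto coordinate subspaces, so each is self-adjoint. Sums of maps have summed adjoints. For compositions I would use $(\Phi_2\circ\Phi_1)^\star=\Phi_1^\star\circ\Phi_2^\star$.

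Chaining these gives the candidate
\begin{equation*}
\begin{aligned}
G_N &= L^\top G_L, \\
G_K &= \tril(G_N)+\tril(G_N)^\top-\diag(G_N), \\
G_M &= (L^\dagger)^\top G_K\,Q^\top + (I-LL^\dagger)^\top G_L\,Q^\top ,
\end{aligned}
\end{equation*}
where $(I-LL^\dagger)^\top=I-LL^\dagger$ because the projector $P$ is symmetric. To check the $G_K$ line, I would pair $G_N$ with $\tril(K+K^\top)-\diag(K)$ and move $\tril$ and $\diag$ across the inner product one at a time. The identity $\langle \tril(G_N), K^\top\rangle = \langle \tril(G_N)^\top, K\rangle$ then supplies the transposed term.

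The last step verifies $\langle G_L,\Phi(dM)\rangle_F=\langle G_M,dM\rangle_F$ by unwinding the chain; this is routine. The main obstacle is not mathematical depth but the precise claim that per-primitive transposition reproduces the global adjoint. I would settle it with the composition and sum rules above, arguing by induction on the number of primitives in the computational graph of $\Phi$, as in the linear-transposition framework of~\cite{radul_you_2023}. One point needs care: the primal values $L^\dagger$ and $Q$ must be treated as constants, i.e.\ not differentiated. This is exactly the setting of Proposition~\ref{prop:linear}, so nothing about the nonuniqueness of $Q$ enters the transposition.
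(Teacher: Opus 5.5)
Your proposal is correct and follows the paper's own argument: linearity of $\Phi$ from Proposition~\ref{prop:linear}, plus primitive-wise transposition justified by the sum and composition rules for adjoints, except that you carry out by hand the transposition the paper deliberately leaves to the AD framework. Your explicit adjoint $G_M = (L^\dagger)^\top G_K Q^\top + (I - LL^\dagger)\,G_L Q^\top$ checks out. It also gives a sanity check the paper lacks: for a Gramian-dependent cotangent $G_L = (\nabla_\Sigma \ell + \nabla_\Sigma \ell^\top)\,L$, it collapses to $(\nabla_\Sigma \ell + \nabla_\Sigma \ell^\top)\,M$, independent of the choice of $Q$.
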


Proposition~\ref{prop:linear} is therefore the only additional property needed to obtain a full reverse-mode implementation from the forward-mode result; explicit derivation of the VJP is unnecessary.

\section{Empirical Validation}
\label{sec:experiments}
We consider a simple LGSSM $A = 0.9 I_4$, $U = 0.01 I_4$, $B = \left[I_2 \, \mathbf{0}\right] \in \mathbb{R}^{2 \times 4}$ and the family of covariances $V(\alpha) = \mathrm{diag}([1, \alpha^2])$, $\alpha \in [0,1]$, for $\mathbf{v}_t$, interpolating between full-rank and rank-one noise.
We set all observations $\mathbf{y}_t = 0$ so that they all are in the span of the model.
The tangents are plotted against the log-marginal likelihood in Figure~\ref{fig:gradient_stability} showcasing correctness despite the rank deficiency and mismatched dimensions.
The ``classical'' QR decomposition result is simply not shown as it returned invalid values throughout for the gradient.
\begin{figure}
    \centering
    \includegraphics[width=1\linewidth]{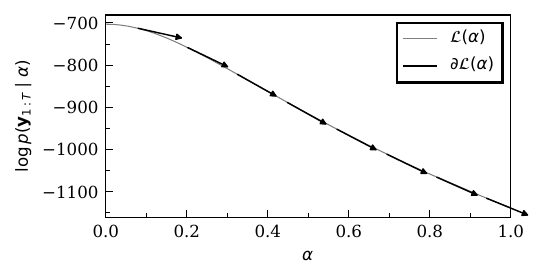}
    \caption{Log-marginal likelihood for the model in Section~\ref{sec:experiments} as a function of $\alpha$ as well as AD-computed tangents represented by arrows on the curve.
    Curve and tangents agree.}
    \label{fig:gradient_stability}
\end{figure}
The naive QR approach diverges, and therefore is not reported, while the approach presented here is correct and provides consistent gradients to the log-likelihood.

\section{Conclusion}
\label{sec:conclusion}

We have presented a robust JVP for the triangularization operator at the core of square-root Kalman filters.
The central insight is that the filter's marginal likelihood, filtered mean, and filtered covariance depend on the Cholesky factor only through the Gramian, so the composite map is smooth in the input matrix even where the triangularization itself is not.
Any tangent satisfying the Gramian differential identity \eqref{eq:gramian_id} propagates exact gradients, and we exhibit such a tangent in closed form in Definition~\ref{def:jvp}, verified by Proposition~\ref{prop:verify}.
Definition~\ref{def:jvp} decomposes the tangent into a column-space component, handled via the Moore--Penrose pseudoinverse, and a null-space correction that accounts for perturbations of $M$ outside $\operatorname{col}(L)$; together they satisfy~\eqref{eq:gramian_id} at any rank, recovering the classical formula when $L$ is invertible.
Proposition~\ref{prop:linear} establishes linearity of the resulting JVP in $dM$, from which the VJP follows by automatic transposition without additional derivation.
The method is immediately compatible with JAX, PyTorch, and similar AD frameworks for end-to-end gradient-based learning of state-space models and implemented in \cite{corenflos_cuthbert_2026} which uses JAX as its AD framework.

\section*{Acknowledgments}
The author would thank Dan Waxman for reporting the \href{https://github.com/state-space-models/cuthbert/issues/211}{bug} which prompted this short paper, Sahel Iqbal for carefully verifying the derivation and testing the resulting code, and Fatemeh Yaghoobi for checking the end manuscript.

\bibliographystyle{IEEEtranN}
\bibliography{main}

\end{document}